\newcommand{\keywords}[1]{\par\addvspace\baselineskip
\noindent\keywordname\enspace\ignorespaces#1}
\begin{document}
\mainmatter  

\title{A Geometric View of Conjugate Priors}

\titlerunning{A Geometric View of Conjugate Priors}

%
%
\author{Arvind Agarwal \and Hal Daum\'e III}
\authorrunning{Arvind Agarwal \and Hal Daum\'e III}

\institute{School of Computing,\\
University of Utah,\\
Salt Lake City, Utah, 84112 USA\\
\{arvind,hal\}@cs.utah.edu
}

%
%

\toctitle{A Geometric View of Conjugate Priors}
\tocauthor{Arvind Agarwal  Hal Daum\'e III}
\maketitle

\begin{abstract}
In Bayesian machine learning, conjugate priors are popular, mostly due to mathematical convenience. In this paper, we show that there are deeper reasons for choosing a conjugate prior. Specifically, we formulate the conjugate prior in the form of Bregman divergence and show that it is the inherent geometry of conjugate priors that makes them appropriate and intuitive. This geometric interpretation allows one to view the hyperparameters of conjugate priors as the {\it effective} sample points, thus providing additional intuition. We use this geometric understanding of conjugate priors to derive the hyperparameters and expression of the prior used to couple the generative and discriminative components of a hybrid model for semi-supervised learning.
\keywords{Bregman divergence, Conjugate prior, Exponential families, Generative models.}
\end{abstract}

\section{Introduction}
\label{sec:intro}
In probabilistic modeling, a practitioner typically chooses a likelihood function (model) based on her knowledge of the problem domain.  With limited training data, a simple maximum likelihood (ML) estimation of the parameters of this model will lead to overfitting and poor generalization.  One can regularize the model by adding a prior, but the fundamental question is: which prior?  We give a turn-key answer to this problem by analyzing the underlying \emph{geometry} of the likelihood model and suggest choosing the unique prior with the same geometry as the likelihood.  This unique prior turns out to be the \emph{conjugate} prior, in the case of the exponential family.  This provides justification beyond ``computational convenience'' for using the conjugate prior in machine learning and data mining applications.


In this work, we give a geometric understanding of maximum likelihood estimation method and a geometric argument in the favor of using conjugate priors. In \secref{likelihoodGeom}, first we formulate the ML estimation problem into a completely geometric problem with no explicit mention of probability distributions. We then show that this geometric problem carries a geometry that is inherent to the structure of the likelihood model. For reasons given in Sections~\ref{sec:conjugateIter} and \ref{sec:nonConjugateIter}, when considering the prior, it is important that one uses the same geometry as likelihood. Using the same geometry also gives the closed-form solution for the maximum-a-posteriori (MAP) problem. We then analyze the prior using concepts borrowed from the information geometry. We show that this geometry induces the {\it   Fisher information metric} and {\it 1-connection}, which are respectively, the natural metric and connection for the exponential family (Section~\ref{sec:infogeom}). One important outcome of this analysis is that it allows us to treat the hyperparameters of the conjugate prior as the effective sample points drawn from the distribution under consideration. This analysis also allows us to extend the results of MAP estimation in the exponential family to the $\alpha$-family (Section~\ref{sec:alphaFamily}) because, similar to exponential families, $\alpha$-families also carry an inherent geometry \cite{zhang04divegence}. We finally extend this geometric interpretation of conjugate priors to analyze the hybrid model given by \cite{Minka2006} in a purely geometric setting and justify the argument presented in \cite{arvind09hybrid} (i.e. a {\it coupling   prior} should be conjugate) using a much simpler analysis (Section~\ref{sec:hybrid}). Our analysis couples the discriminative and generative components of hybrid model using the Bregman divergence which reduces to the coupling prior given in \cite{arvind09hybrid}. This analysis avoids the {\it explicit} derivation of the hyperparameters, rather automatically gives the hyperparameters of the conjugate prior along with the expression.

\section{Motivation}
\label{sec:motiv}
Our analysis is driven by the desire to understand the geometry of the conjugate priors for the exponential families. This understanding has many advantages that are described in the remainder of the paper: an extension of notion of conjugacy beyond the exponential family (to $\alpha$-family),
and geometric analysis of models that use the conjugate priors \cite{arvind09hybrid}.

We motivate our analysis by asking ourselves the following question: Given a parametric model $p(x;\th)$ for the data likelihood, and a prior on its parameters $\th$, $p(\th;\alpha,\beta)$; what should the hyperparameters $\alpha$ and $\beta$ of the prior encode? We know that $\th$ in the likelihood model is the estimation of the parameter using the given data points. In other words, the estimated parameter fits the model according to the given data  while prior on the parameter provides the generalization.
 This generalization is enforced by some prior belief encoded in the hyperparameters. Unfortunately, one does not know what is the likely value of the parameters; rather one might have some belief in what \emph{data points} are likely to be sampled from the model. Now the question is: Do the hyperparameters encode this belief in the parameters in terms of the sampling points? Our analysis shows that the hyperparameters of the conjugate prior is nothing but the effective sampling points. In case of non-conjugate priors, interpretation of hyperparameters is not clear.

A second motivation is the following geometric analysis.  Before we go into the problem, consider two points in the {\it Euclidean} space which one would like to interpolate using a parameter $\gamma \in [0,1]$. A natural way to do so is to interpolate them linearly i.e., connect two points using a straight line, and then find the interpolating point at the desired $\gamma$, as shown in \figref{euc}. This interpolation scheme does not change if we move to a non-Euclidean space.  In other words, if we were to interpolate two points in the non-Euclidean space, we would find the interpolating point by connecting the two points by a geodesic (an equivalent to the straight line in the non-Euclidean space) 
and then finding the point at the desired $\gamma$, shown in \figref{noneuc}. 

\begin{figure}[tc]
\begin{center}
\subfigure[]{\label{fig:euc}\includegraphics[scale=0.45]{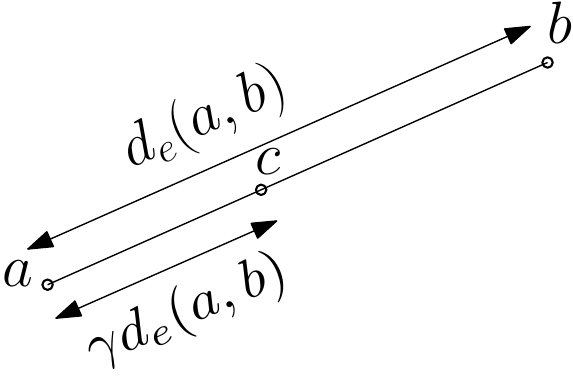}} \hspace{1in}
\subfigure[]{\label{fig:noneuc}\includegraphics[scale=0.40]{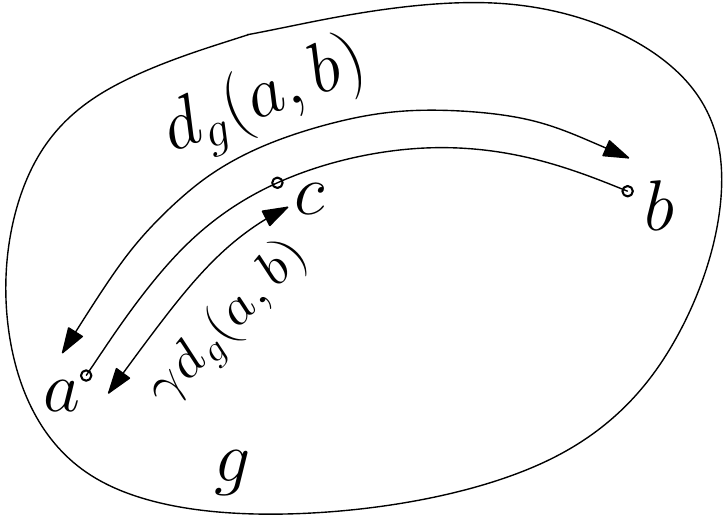}}
\caption{{\small\textsf{ Interpolation of two points $a$ and $b$ using (a) Euclidean geometry, and (b) non-Euclidean geometry. Here geometry is defined by the respective distance/divergence functions $d_e$ and $d_g$. It is important to notice that the divergence is a generalized notion of the distance in the non-Euclidean spaces, in particular, in the spaces of the exponential family statistical manifolds. In these spaces, it is the divergence function that define the geometry.}}} 
\label{fig:eucAndNoneuc}
\end{center}
\end{figure}

This situation arises when one has two models and wants to build a better model by interpolating them. This exact situation is
encountered in \cite{Minka2006} where the objective is to build a hybrid model by interpolating (or coupling) discriminative and generative models. Agarwal et.al. \cite{arvind09hybrid} couples these two models using the conjugate prior, and empirically shows using a
conjugate prior for the coupling outperforms the original choice \cite{Minka2006} of a Gaussian prior.  In this work, we find the hybrid model by
interpolating the two models using the {\it inherent geometry}\footnote{In exponential family statistical manifold, inherent geometry is defined by the divergence function because it is the divergence function that induces the metric structure and connection of the manifold. Refer \cite{amarinagaoka} for more details.} 
of the space (interpolate along the geodesic in the space defined by the inherent geometry) which automatically results in the conjugate prior along with its hyperparameters. Our analysis and the analysis of Agarwal et al. lead to the same result, but ours is much simpler and
naturally extends to the cases where one wants to couple more than two models. One big advantage of our analysis is that unlike prior
approaches \cite{arvind09hybrid}, we need not know the expression and the hyperparameters of the prior in advance. They are automatically derived by the analysis. Our analysis based on the geometric interpretation can also be used to interpolate the models using a polynomial of higher degree instead of just the straight line i.e., quadratic interpolation etc., and to derive the corresponding prior. Our analysis only requires the inherent geometry which is given by the models under the consideration and the interpolation parameters (parameters of the polynomial). No explicit expression of the coupling
prior is needed.

\section{Background}
\label{sec:background}
In this section. we give the required background, specially, we revisit the concepts related to Legendre duality, exponential families and Bregman divergence.

\subsection{Legendre Duality}
Let $\c{M} \subseteq \b{R}^d$ and $\Theta \subseteq \b{R}^d$ be two
spaces and let $F:\c{M} \to \b{R}^+$ and $G:\Theta \to \b{R}^+$ be two
convex functions. $F$ and $G$ are said to be \emph{conjugate duals} of each other if:
\begin{align}
F(\mu) := \sup_{\th \in \Theta} \{\innerprod{\mu,\th} - G(\th)\}
\label{conjugatedual}
\end{align}
here $\langle a,b \rangle$ denotes the dot product of vectors $a$ and $b$. The spaces ($\Theta$ and $\c{M}$) associated with these dual functions are called {\it dual spaces}. We sometime use the standard notation to refer this duality i.e., $ G = F^*$ and $F = G^*$. A particularly important connection between dual spaces is that: for each $\mu \in \c{M}, \grad F(\mu) = \th \in \Th$ (denoted as $\mu^* = \th)$) and similarly, for each $\th \in \Th, \grad G(\th) = \mu \in \c{M}$ (or $\th^* = \mu)$).  For more details, refer to \cite{rockafellarConvex}. \figref{duality} gives a pictorial representation of this duality and the notations associated with it.

\begin{figure}[tc]
 \begin{center}
  \includegraphics[width=0.25 \textwidth]{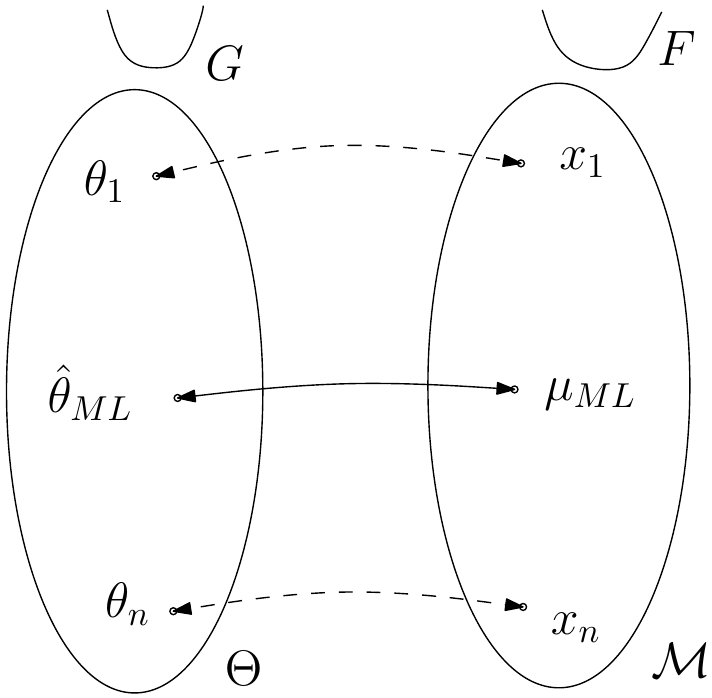}
  \caption{{\small\textsf{Duality between mean parameters and natural parameters. Notice the convex functions defined over both spaces. these functions are dual of each other and so are the spaces.}}}
  \label{fig:duality}
 \end{center}
\end{figure}

\subsection{Bregman Divergence}
We now give a brief overview of Bregman divergence (for more details see \cite{Banerjee05Clustering}). Let  $F: \c{M} \to \b{R}$ be a continuously-differentiable real-valued and strictly {\it convex function} defined on a closed {\it convex set} $\c{M}$. The Bregman divergence associated with $F$ for points $p, q \in \c{M}$ is:
{\small
\begin{equation}
B_F(p \rVert q) = F(p)-F(q)-\langle \grad F(q),(p-q)\rangle
\label{bregman}
\end{equation}
}

\noindent
If $G$ is the {\it conjugate dual} of $F$ then:
{\small
\begin{align}
B_F(p\|q) = B_{G}(q^*\|p^*)
\label{bregmanflip}
\end{align}
}
here $p^*$ and $q^*$ are the duals of $p$ and $q$ respectively. It is emphasized that Bregman divergence is not symmetric i.e., in general, $B_F(p\|q) \ne B_F(q\|p)$, therefore it is important what directions these divergences are measured in.

\subsection{Exponential Family}
In this section, we review the exponential family. The exponential family is a set of distributions, whose probability density function\footnote{``Density function'' can be replaced by ``mass function'' in the case of discrete random variables} can be expressed in the following form:
\begin{equation}
p(x;\th)=p_o(x)\exp(\innerprod{\theta, \phi(x)}-G(\th))
\label{expdist}
\end{equation}
here $\phi(x):\c{X}^m \to \b{R}^d$ is a vector \emph{potentials}  or \emph{sufficient statistics} and $G(\theta)$ is a normalization constant or {\it log-partition function} which ensures that distributions are normalized. With the potential functions $\phi(x)$ fixed, every $\th$ induces a particular member $p(x;\th)$ of the family. In our framework, we deal with exponential families that are \emph{regular} and have the \emph{minimal representation}\cite{VariationExpoFamilies}.

The exponential family has a number of convenient properties and subsumes many common distributions.  It includes the Gaussian, Binomial, Beta, Multinomial and Dirichlet distributions, hidden Markov models, Bayes nets, etc.. One important property of the exponential family is the existence of conjugate priors.  Given any member of the exponential family in \eqref{expdist}, the \emph{conjugate prior} is a distribution over its \emph{parameters} with the following form:
\begin{equation}
p(\th|\alpha,\beta)=m(\alpha,\beta) \;\exp(\langle \th, \alpha \rangle - \beta G(\th))
\label{conjprior}
\end{equation}
here $\alpha$ and $\beta$ are hyperparameters of the conjugate prior.  Importantly, the function $G(\cdot)$ is the same between the exponential family member and its conjugate prior.

A second important property of exponential family member is that log-partition function $G$ is convex and defined over the convex set $\Th := \{\th \in \b{R}^d : G(\th) < \infty\}$.  Since the log-partition function $G$ is convex over this set, it induces a Bregman divergence on the space $\Theta$.

Another important property of the exponential family is the {\it one-to-one} mapping between the \emph{canonical parameters} $\th$ and the so-called ``\emph{mean parameters}'' which we denote by $\mu$. For each canonical parameter $\th \in \Theta$, there exists a mean parameter $\mu$, which belongs to the space $\c{M}$ defined as:
\begin{align}
 \c{M} := \Big\{ \mu \in \b{R}^d : \mu = \int \phi(x) p(x;\th)  \,dx \quad \forall \th \in \Th \Big\}
\label{muspace}
\end{align}
Our notation has been deliberately suggestive.  $\Theta$ and $\c{M}$ are dual spaces, in the sense of Legendre duality because of the following relationship between the log-partition function $G(\th)$ and the expected value of the sufficient statistics $\phi(x)$:
\begin{align}
\label{expfamnorm}
\grad G(\th) = \b{E}(\phi(x)) = \mu.
\end{align}
In Legendre duality, we know that two spaces $\Th$ and $\c{M}$ are dual of each other if for each $\th \in \Th, \, \grad G(\th) = \mu \in \c{M}$. Here $G$ (the log partition function of the exponential family distribution) is the function defined on the space $\Th$. WE call the function in the dual space $\c{M}$ to be $F$ i.e., $F = G^*$. A pictorial representation of the duality between canonical parameter space $\Theta$ and mean parameter space $\c{M}$ is given in \figref{duality}.


\section{Likelihood, Prior and Geometry}
\label{sec:mainsecgeometry}
In this section, we first formulate the ML problem into a Bregman median problem (Section~\ref{sec:likelihoodGeom}) and then show that corresponding MAP
problem can also be converted into a Bregman median problem (Section~\ref{sec:conjugateIter}). The MAP Bregman median problem consists of two parts: a likelihood model and a prior. We argue (Sections~\ref{sec:conjugateIter} and \ref{sec:nonConjugateIter}) that a Bregman median problem makes sense
only when both of these parts have the same geometry.  Having the same geometry amounts to having the same log-partition function leading to the property of conjugate priors.

\subsection{Likelihood in the form of Bregman Divergence}
\label{sec:likelihoodGeom}
Following \cite{Collins01}, we can write the distributions belonging to the exponential family in terms of Bregman divergence. Let $p(x;\th)$ be the exponential family distribution as defined in \eqref{expdist}, the log of which (likelihood) can be written as\footnote{For the simplicity of the notations we will use $x$ instead of $\phi(x)$ assuming that $x \in \b{R}^d$. This does not change the analysis}:
\begin{equation}
\log p(x;\th)= \log p_o(x) + F(x) - B_F(x \rVert \grad G(\th))
\label{expbreg}
\end{equation}
This relationship depends on two observations: $F(\grad G(\th)) + G(\th) = \grad G(\th)\th $ and $\grad F(\th) = (\grad G)^{-1}(\th) \Rightarrow (\grad F)(\grad G(\th)) = \th $. These two observations can be used with \eqref{bregman} to see that \eqref{expbreg} is equivalent to the probability distribution defined in \eqref{expdist}. This representation of likelihood in the form of Bregman divergence gives insight in the geometry of the likelihood function. Gaining the insight into the exponential family distributions and establishing a meaningful relationship between likelihood and prior is the primary objective of this work.

In learning problems, one is interested in estimating the parameters $\th$ of the model which results in low generalization error. Perhaps the most standard estimation method is {\it maximum likelihood} (ML). The ML estimate, $\hat{\th}_{ML}$,  of a set of $n$ i.i.d. training data points $\c{X} = \{x_1,\ldots x_n\}$  drawn from the exponential family is obtained by solving the following problem:
$$ \hat{\th}_{ML} = \max_{\th \in \Th} \; \log p(\c{X};\th)$$

\begin{theorem}
 Let $\c{X} = \{x_1,\ldots x_n\}$ be a set of $n$ i.i.d. training data points drawn from the exponential family distribution with the log partition function $G$, $F$ be the dual function of $G$, then dual of ML estimate ($\hat{\th}_{ML}$) of $\c{X}$ under the assumed exponential family model solves the following Bregman median problem:
$$ \hat{\mu}_{ML} = \min_{\mu \in \c{M}} \sum_{i=1}^n B_F(x_i \rVert \mu)$$
\label{mediantheorem1}
\end{theorem}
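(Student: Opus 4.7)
The plan is to plug the Bregman-divergence form of the log-likelihood, equation \eqref{expbreg}, directly into the maximum-likelihood objective and then rewrite the optimization in the dual (mean) parameter space using Legendre duality.

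First I would expand the log-likelihood of the i.i.d. sample. By \eqref{expbreg},
\begin{equation*}
\log p(\c{X};\th) \;=\; \sum_{i=1}^n \log p(x_i;\th) \;=\; \sum_{i=1}^n \bigl[\log p_o(x_i) + F(x_i) - B_F(x_i \,\rVert\, \grad G(\th))\bigr].
\end{equation*}
The terms $\log p_o(x_i)$ and $F(x_i)$ do not depend on $\th$, so maximizing the log-likelihood over $\th \in \Th$ is equivalent to
\begin{equation*}
\hat{\th}_{ML} \;=\; \arg\min_{\th \in \Th} \sum_{i=1}^n B_F(x_i \,\rVert\, \grad G(\th)).
\end{equation*}

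Next I would change variables using the Legendre duality established in the background section. Since the exponential family is assumed regular and minimally represented, $\grad G : \Th \to \c{M}$ is a bijection, with dual variable $\mu = \grad G(\th)$. Substituting $\mu$ for $\grad G(\th)$ and using that the optimization range transports bijectively, we obtain
\begin{equation*}
\hat{\mu}_{ML} \;=\; \grad G(\hat{\th}_{ML}) \;=\; \arg\min_{\mu \in \c{M}} \sum_{i=1}^n B_F(x_i \,\rVert\, \mu),
\end{equation*}
which is the claimed Bregman median problem. The identification $\hat{\mu}_{ML} = (\hat{\th}_{ML})^*$ is exactly the notion of ``dual of the ML estimate'' used in the statement.

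The only step that needs a moment of care is the reduction of the per-sample likelihood to the Bregman form in \eqref{expbreg}; this is the identity noted right after that display, using $F(\grad G(\th)) + G(\th) = \innerprod{\grad G(\th), \th}$ and $(\grad F)(\grad G(\th)) = \th$. Once that is in hand, the rest is a one-line rewriting plus the duality change of variables, so I do not anticipate a substantive obstacle beyond verifying that bijectivity of $\grad G$ makes the min over $\Th$ and the min over $\c{M}$ attain the same value at dual points.
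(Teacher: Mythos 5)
Your proposal is correct and follows essentially the same route as the paper's proof: substitute the Bregman form of the log-likelihood from \eqref{expbreg}, discard the $\th$-independent terms, and pass to the mean-parameter space via $\mu = \grad G(\th)$ using \eqref{expfamnorm}. Your added remark that regularity and minimality make $\grad G$ a bijection (so the two minimizations correspond exactly) is a point the paper leaves implicit, but it is the same argument.
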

\begin{proof}
The log-likelihood of $\c{X}$ under the assumed exponential family distribution is given by $\log p(\c{X};\th) = \sum_{i=1}^n \log p(x_i;\th)$ which along with \eqref{expbreg} can be used to compute $\hat{\th}_{ML}$:
{\small
\begin{align}
\hat{\th}_{ML} & = \max_{\th \in \Th}\sum_{i=1}^n \Big( \log p_o(x_i) + F(x_i) - B_F(x_i \rVert \grad G(\th)) \Big) \nonumber \\
	   & = \min_{\th \in \Th} \sum_{i=1}^n B_F(x_i \rVert \grad G(\th))
\label{mlmeanspace}
\end{align}
}
which using \eqref{expfamnorm} gives the desired result.
\end{proof}
The above theorem converts the problem of maximizing the log likelihood $\log p(\c{X};\th)$ into an equivalent problem of minimizing the corresponding Bregman divergences which is nothing but a \emph{Bregman median} problem, the solution to which is given by $\hat{\mu}_{ML} = \sum_{i=1}^n x_i$. ML estimate $\hat{\th}_{ML}$ can now be computed using \eqref{expfamnorm}, $ \hat{\th}_{ML} = (\grad G)^{-1}(\hat{\mu}_{ML})$.

\begin{lemma}
 If $x$ is the sufficient statistics of the exponential family with the log partition function $G$, and $F$ is the dual function of $G$ defined over the mean parameter space $\c{M}$  then $x \in \c{M}$.
\label{muxlemma}
\end{lemma}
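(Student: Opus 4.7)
The plan is to derive the lemma as an immediate specialisation of Theorem~\ref{mediantheorem1} to a single observation. Taking $\mathcal X = \{x\}$ with $n = 1$, the theorem states that the dual of the ML estimate is
$$\hat\mu_{ML} \;=\; \arg\min_{\mu \in \mathcal M} B_F(x\,\|\,\mu),$$
while the comment immediately following the theorem identifies the Bregman median of a single point as that point itself, giving $\hat\mu_{ML} = x$. On the other hand, by construction $\hat\mu_{ML} = \nabla G(\hat\theta_{ML})$, so by the Legendre-duality relation (\ref{expfamnorm}) it lies in $\mathcal M$. Combining the two identities gives $x = \hat\mu_{ML} \in \mathcal M$, which is the claim.

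The step I expect to be the main obstacle is justifying $\arg\min_{\mu\in\mathcal M}B_F(x\|\mu) = x$ without circularity, since the infimum $B_F(x\|x)=0$ is attained inside $\mathcal M$ only if $x$ already sits there. The substantive input is the classical fact that for a regular, minimal exponential family the gradient map $\nabla G$ sends $\Theta$ bijectively onto the (relative) interior of the convex hull of the support of $\phi$; any realisation $\phi(x)$ coming from a sample of the model lies in this support and hence in $\mathcal M$. Because the background section already restricts attention to regular, minimal families, I would cite this structural fact in one line rather than prove it, and then close the argument by invoking Theorem~\ref{mediantheorem1} as above to read off $\hat\mu_{ML}=x$.

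A self-contained alternative, should one wish to avoid appealing to the external structure theorem, is to exhibit explicitly a natural parameter whose expected sufficient statistic equals $x$ and then invoke the definition (\ref{muspace}) of $\mathcal M$ directly; but that route is typically no easier than the appeal to convex analysis above, and would distract from the geometric narrative this section is building.
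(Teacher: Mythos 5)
Your first route --- specialising Theorem~\ref{mediantheorem1} to $n=1$ --- is circular, and you correctly flag this yourself: the theorem's proof already replaces $\grad G(\th)$ by $\mu\in\c{M}$, and identifying the minimiser of $B_F(x\,\|\,\cdot)$ over $\c{M}$ with $x$ presupposes that $x$ is an admissible mean parameter, which is exactly the content of the lemma. So everything rests on your fallback, the ``classical structural fact,'' and that is where the genuine gap lies. The correct statement of that fact is that for a regular minimal family $\grad G$ maps $\Th$ bijectively onto the \emph{relative interior} of the closed convex hull of the support of $\phi$; hence $\c{M}$, as defined in \eqref{muspace}, \emph{is} that relative interior. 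A realisation $\phi(x)$ lies in the support, hence in the closed convex hull, but typically \emph{not} in its relative interior: for the Bernoulli family (the paper's own Example~2) one has $\phi(x)=x\in\{0,1\}$ while $\c{M}=(0,1)$, and for a Gaussian with unknown mean and variance $\phi(x)=(x,x^2)$ lies on the parabola bounding $\c{M}=\{(m,s):s>m^2\}$. So the inference ``lies in this support and hence in $\c{M}$'' is a non sequitur, and in these standard cases the conclusion it is meant to deliver is false as literally stated --- this is the familiar phenomenon that the MLE fails to exist when the average sufficient statistic sits on the boundary of the convex support.

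For comparison, the paper does not actually prove the lemma; its ``proof'' is a pointer to Kass and Vos, and the citable fact there is the one above, which gives only $x\in\overline{\c{M}}$. Any honest argument therefore has to either (i) weaken the conclusion to $x\in\overline{\c{M}}$ (which still supports the subsequent median formulation, read as an infimum), or (ii) add a hypothesis forcing the observed sufficient statistic into the interior of the convex support (e.g.\ full-dimensional support, or enough samples that the empirical mean is interior). Your second, ``self-contained'' alternative --- exhibiting a $\th$ with $\b{E}_\th[\phi]=\phi(x)$ and invoking \eqref{muspace} --- runs into the same obstruction, since no such $\th$ exists when $\phi(x)$ is a boundary point.
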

\begin{proof}
 Refer \cite{kassvos} pg 39.
\end{proof}

\begin{lemma}
 Let $x$ be the sufficient statistics of the exponential family with the log partition function $G$, and $F$ be the dual function of $G$ defined over the mean parameter space $\c{M}$,  then there exists a $\th \in \Theta$, such that $x^*=\th$.
\label{thetaxlemma}
\end{lemma}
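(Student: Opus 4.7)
The plan is to derive this lemma as an immediate corollary of \lemref{muxlemma} together with the Legendre duality setup recalled in the Background section, so very little new work is required.

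First I would invoke \lemref{muxlemma} to place $x$ inside the mean parameter space $\c{M}$. Once we know $x \in \c{M}$, we are in a position to apply the Legendre duality between $\c{M}$ and $\Theta$: by the definition of conjugate duals (cf.\ \eqref{conjugatedual} and the surrounding discussion), for every $\mu \in \c{M}$ the gradient $\grad F(\mu)$ lies in $\Theta$, and the dual of $\mu$ is declared to be $\mu^* := \grad F(\mu)$. Therefore the natural candidate is simply
\[
\th := \grad F(x),
\]
which lives in $\Theta$ by the regularity/minimality assumptions on the exponential family under which $\grad F:\c{M}\to\Theta$ is a well-defined bijection. By the very definition of the dualization operation we then have $x^* = \grad F(x) = \th$, which is exactly the claim.

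The only nontrivial ingredient is the Legendre bijection $\grad F:\c{M}\to\Theta$, and this is already imported from the Background (the authors are working with regular exponential families in minimal representation, for which $F=G^{*}$ is strictly convex and differentiable and $\grad F = (\grad G)^{-1}$). I expect no genuine obstacle: the proof is essentially one line combining \lemref{muxlemma} with the definition $\mu^{*}=\grad F(\mu)$. If anything, the subtlety worth flagging explicitly is that $\grad F(x)$ is guaranteed to land in $\Theta$ rather than merely in $\b{R}^{d}$, and this is exactly what the Legendre duality between the convex functions $F$ and $G$ provides.
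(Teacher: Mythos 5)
Your proposal is correct and follows essentially the same route as the paper's own proof: first place $x$ in $\c{M}$ via Lemma~\ref{muxlemma}, then apply the Legendre duality between $\c{M}$ and $\Theta$ to conclude $x^* = \grad F(x) \in \Theta$. The only difference is that you spell out the map $\mu^* = \grad F(\mu)$ and flag the bijectivity issue explicitly, which the paper leaves implicit.
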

\begin{proof}
 $\c{M}$ and $\Theta$ are dual of each other so by the definition of duality, for every $\mu \in \c{M}$, there exists a $\th \in \Theta$ such that $\th = \mu^*$, and From Lemma~\ref{muxlemma} since  $x\in \c{M}$, which implies $x^* = \theta$.
\end{proof}

\begin{corollary}[ML as Bregman Median]
 Let $G(\th)$ be the log partition function of the exponential family defined over the convex set $\Theta$, $\c{X} = \{x_1,\ldots x_n\}$ be set of $n$ i.i.d data points drawn from this exponential family, and $\th_i$ be the dual of $x_i$, then ML estimation, $\hat{\th}_{ML}$ of $\c{X} = \{x_1,\ldots x_n\}$ solves the following optimization problem:
\begin{align}
 \hat{\th}_{ML} = \min_{\th \in \Th} \sum_{i=1}^n B_G(\th \| \th_i)
\label{mlindual}
\end{align}
\end{corollary}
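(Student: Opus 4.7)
The plan is to derive this corollary as a direct consequence of Theorem~\ref{mediantheorem1} by transporting the Bregman median problem from the mean parameter space $\c{M}$ to its Legendre dual, the canonical parameter space $\Th$. All the machinery we need is already in place: the Bregman flip identity \eqref{bregmanflip}, Lemma~\ref{thetaxlemma} guaranteeing the existence of duals $\th_i = x_i^*$, and the fact that $\grad G$ is a bijection between $\Th$ and $\c{M}$.

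First, I would start from the statement of Theorem~\ref{mediantheorem1}, namely that $\hat{\mu}_{ML}$ minimizes $\sum_{i=1}^n B_F(x_i \rVert \mu)$ over $\mu \in \c{M}$. Next, I would apply the Bregman flip \eqref{bregmanflip} term-by-term to rewrite each summand as $B_F(x_i \rVert \mu) = B_G(\mu^* \rVert x_i^*)$. By Lemma~\ref{thetaxlemma}, the dual $x_i^*$ exists in $\Th$ and is precisely the $\th_i$ appearing in the corollary; and by Legendre duality between $\c{M}$ and $\Th$, setting $\th = \mu^* = \grad F(\mu)$ gives a smooth bijection onto $\Th$. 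So the minimization over $\mu \in \c{M}$ can be rewritten equivalently as a minimization over $\th \in \Th$, yielding
\begin{equation*}
\hat{\th}_{ML} = \hat{\mu}_{ML}^{\,*} = \min_{\th \in \Th} \sum_{i=1}^n B_G(\th \rVert \th_i),
\end{equation*}
which is exactly the claim. Finally, I would note that the minimizer is indeed $\hat{\th}_{ML}$ (and not some other stationary point) because the correspondence $\mu = \grad G(\th)$ is the same one used in the proof of Theorem~\ref{mediantheorem1} via \eqref{expfamnorm}.

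There is no serious obstacle here, since this is essentially a change of variables; the only step requiring a little care is making sure the dual operation commutes with the $\arg\min$. That is handled by observing that $\th \mapsto \mu$ is a diffeomorphism, so minimizers transport directly. The conceptual payoff is what matters: the ML problem, originally stated as a Bregman median of the data points $x_i$ in the mean parameter space under divergence $B_F$, is equivalent to a Bregman median of their duals $\th_i$ in the canonical parameter space under divergence $B_G$. This symmetric dual viewpoint is exactly what will let us match the likelihood's geometry against that of the prior in the subsequent sections.
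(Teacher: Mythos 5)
Your proposal is correct and follows essentially the same route as the paper's own proof: invoke Lemma~\ref{thetaxlemma} to identify $x_i^* = \th_i$, apply the flip identity \eqref{bregmanflip} to each term to get $B_F(x_i \rVert \mu) = B_G(\mu^* \rVert \th_i)$, and transport the minimization from $\c{M}$ to $\Th$ via the Legendre correspondence $\th = \mu^*$. Your additional remark that the $\arg\min$ commutes with the dual map because $\grad G$ is a bijection is a small but welcome tightening of the paper's terser argument.
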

\begin{proof}
 Proof directly follows from Lemma~\ref{thetaxlemma} and Theorem~\ref{mediantheorem1}. From Lemma~\ref{thetaxlemma}, we know that $x_i^* = \th_i$. Now using Theorem~\ref{mediantheorem1} and  \eqref{bregmanflip}, $B_F(x_i \rVert \mu) = B_G(\th \rVert x_i^*) = B_G(\th \rVert \th_i)$.
\end{proof}

The above expression requires us to find a $\th$ so that divergence from $\th$ to other $\th_i$ is minimized. Now note that $G$ is what defines this divergence and hence the geometry of the $\Theta$ space (as discussed earlier in \secref{motiv}). Since $G$ is the log partition function of exponential family, {\bf it is the log-partition function that determines the geometry of the space}. We emphasize that divergence is measured from the parameter being estimated to other parameters $\th_i$(s), as shown in \figref{conjugate}.

\begin{example}{{\bf (1-D Gaussian)}}
The exponential family representation of a 1-d Gaussian is $p(x;\th) = \frac{1}{\sqrt{2\pi \sigma^2}}\exp(-\frac{(x-a)^2}{2\sigma^2})$ with $\th=\frac{a}{\sigma^2}$ and $G(\th) = \frac{\sigma^2}{2}\th^2$ whose ML estimation is just $(\grad G)^{-1}(\mu) = \frac{\mu}{\sigma^2}$ which gives $a=\mu = \frac{1}{n} \sum_i x_i$  i.e. data mean.
\end{example}

\begin{example}{{\bf (1-D Bernoulli)}}
The exponential family representation of a Bernoulli distribution $p = a^x (1-a)^{1-x}$ is the distribution with $\th = \log \frac{a}{1-a}$ with $G(\th) = \log (1+e^\th)$ whose ML estimation is $(\grad G)^{-1}(\mu) = \log \frac{\mu}{1-\mu}$. Comparing it with $\th$ gives $a = \mu = \frac{1}{n} \sum_i x_i$ which is the estimated probability of the event in $n$ trials.
\end{example}

\subsection{Conjugate Prior in the form of Bregman Divergence}
\label{sec:priorGeom}
We now give an expression similar to the likelihood for the conjugate prior:
{\small
\begin{align}
\log p(\th|\alpha, \beta) = \log m(\alpha,\beta) + \beta (\innerprod{\th, \frac{\alpha}{\beta}} - G(\th))
\label{conjugate}
\end{align}
}
\eqref{conjugate} can be written in the form of Bregman divergence by a direct comparison to \eqref{expdist}, replacing $x$ with $\alpha/\beta$.
{\small
\begin{align}
& \log p(\th|\alpha, \beta) = \log m(\alpha,\beta) + \beta \left(F\left(\frac{\alpha}{\beta}\right) - B_F\left(\frac{\alpha}{\beta} \rVert \grad G(\th)\right)\right)
\label{priorbregman}
\end{align}
}
The expression for the joint probability of data and parameters is given by:
{\small
\begin{align*}
 \log p(x,\th|\alpha, \beta) & = \log p_o(x)  + \log m(\alpha,\beta) + F(x) +  \beta F\left(\frac{\alpha}{\beta}\right) \nonumber  \\
& \hspace{4em}  - \left(B_F(x \rVert \grad G(\th)) + \beta B_F\left(\frac{\alpha}{\beta} \rVert \grad G(\th)\right)\right)
\end{align*}
}
Combining all terms that do not depend on $\th$:
\begin{align}
 \log p(x,\th|\alpha, \beta) &  =  \text{const} - B_F(x \rVert \mu ) - \beta B_F\left(\frac{\alpha}{\beta} \rVert \mu\right)
\label{mapproblem}
\end{align}

\subsection{Geometric Interpretation of Conjugate Prior}
\label{sec:conjugateIter}
In this section we give a geometric interpretation of the term
$B_F(x \rVert \mu) + \beta B_F(\frac{\alpha}{\beta} \rVert \mu)$ from
\eqref{mapproblem}.

\begin{theorem}[MAP as Bregman median]
 Given a set $\c{X}$ of $n$ i.i.d examples drawn from the exponential family distribution with the log partition function $G$ and a conjugate prior as in \eqref{priorbregman}, MAP estimation of parameters is $\hat{\th}_{MAP} = \hat{\mu}_{MAP}^*$ where $\hat{\mu}_{MAP}$ solves the following problem:
{\small
\begin{align}
\hat{\mu}_{MAP} = \min_{\mu \in \c{M}} \sum_{i=1}^n B_F(x_i \rVert \mu) + \beta B_F\left(\frac{\alpha}{\beta} \rVert \mu\right)
\label{jointmin}
\end{align}
}
which admits the following solution:
{\small
\begin{align*}
\mu = \frac{\sum_{i=1}^n x_i + \alpha}{n+\beta}
\end{align*}
}
\end{theorem}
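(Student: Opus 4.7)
The plan is to proceed in two stages: first, recast the MAP optimization as the stated Bregman-median problem over $\mu$; second, solve that convex minimization in closed form.

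For the first stage, I would start from the definition $\hat{\th}_{MAP} = \arg\max_{\th \in \Th} \log p(\c{X},\th\mid\alpha,\beta)$. Because the $n$ data points are i.i.d., the log joint splits as $\sum_i \log p(x_i;\th) + \log p(\th\mid\alpha,\beta)$. Now I would substitute the Bregman-divergence forms already established in \eqref{expbreg} and \eqref{priorbregman}, then absorb every term independent of $\th$ into a constant, exactly as was done to derive \eqref{mapproblem}. This yields
\begin{align*}
\hat{\th}_{MAP} = \arg\min_{\th \in \Th} \sum_{i=1}^{n} B_F\!\left(x_i \,\big\rVert\, \grad G(\th)\right) + \beta\, B_F\!\left(\tfrac{\alpha}{\beta} \,\big\rVert\, \grad G(\th)\right).
\end{align*}
Invoking \eqref{expfamnorm}, namely $\mu = \grad G(\th)$, and the fact that this is a bijection between $\Theta$ and $\c{M}$ (Legendre duality), I can reparameterize the optimization to run over $\mu \in \c{M}$. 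This gives \eqref{jointmin} and justifies recovering $\hat{\th}_{MAP}$ as $\hat{\mu}_{MAP}^{*}$.

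For the second stage I would compute the gradient in $\mu$ directly. Using $B_F(p\|q) = F(p)-F(q)-\langle \grad F(q),\, p-q\rangle$, a standard calculation gives $\grad_{\mu} B_F(p\|\mu) = \grad^{2}F(\mu)(\mu-p)$. Applying this to each summand in \eqref{jointmin} and setting the gradient to zero yields
\begin{align*}
\grad^{2}F(\mu)\Bigl[\, (n+\beta)\mu - \sum_{i=1}^{n} x_i - \alpha \,\Bigr] = 0.
\end{align*}
Since $F$ is strictly convex on the open set $\c{M}$, its Hessian $\grad^{2}F(\mu)$ is positive definite and therefore invertible, so the bracketed expression must vanish. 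Solving gives $\mu = (\sum_i x_i + \alpha)/(n+\beta)$, which is the claimed closed-form solution. Strict convexity of $F$ also certifies that this stationary point is the unique global minimizer, and it lies in $\c{M}$ because $\c{M}$ is convex and contains each $x_i$ by Lemma~\ref{muxlemma} together with $\alpha/\beta$ (which is implicit in the prior being well-defined).

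There is no real conceptual obstacle here; the work is essentially bookkeeping built on \eqref{mapproblem}. The only point that deserves care is justifying the reparameterization from $\th$ to $\mu$, which rests on the bijectivity of $\grad G$ on a regular, minimally-represented exponential family, and verifying that the resulting $\mu$ actually belongs to $\c{M}$ so that the optimum is attained in the interior rather than on the boundary.
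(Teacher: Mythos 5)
Your proposal is correct and follows essentially the same route as the paper: reduce the MAP objective to the weighted Bregman median problem \eqref{jointmin} via \eqref{mapproblem} and the bijection $\mu = \grad G(\th)$, then solve the first-order conditions. Your second stage (explicit gradient $\grad_{\mu}B_F(p\|\mu)=\grad^{2}F(\mu)(\mu-p)$ and invertibility of the Hessian of the strictly convex $F$) is simply a more explicit rendering of the paper's terse instruction to ``expand using \eqref{bregman}'' and apply the duality conditions, and your added remarks on uniqueness and membership of the solution in $\c{M}$ are welcome but not a departure in method.
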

\begin{proof}
MAP estimation by definition maximizes \eqref{mapproblem} for all data points $\c{X}$ which is equivalent to minimizing $B_F(x_i \rVert \mu) + \beta B_F(\frac{\alpha}{\beta} \rVert \mu)$. One can expand this expression using \eqref{bregman} and use conditions $F(\grad G(\th)) + G(\th) = \grad G(\th)\th $ and $\grad F(\th) = (\grad G)^{-1}(\th)$ to obtain the desired solution.
\end{proof}

The above solution gives a natural interpretation of MAP estimation. One can think of prior as $\beta$ number of extra points at position $\alpha/\beta$. $\beta$ works as the effective sample size of the prior .
{\small
\begin{align}
 \mu = \frac{\sum_{i=1}^n x_i + \sum_{i=1}^{\beta} \frac{\alpha}{\beta}}{n+\beta}
\label{closedsol}
\end{align}
}
The expression \eqref{jointmin} is analogous to \eqref{mlmeanspace} in the sense that both are defined in the dual space $\c{M}$. One can convert \eqref{jointmin} into an expression similar to \eqref{mlindual} in the dual space which is again a Bregman median problem in the parameter space.
{\small
\begin{align}
\hat{\th}_{MAP} = \min_{\th \in \Theta} \sum_{i=1}^n B_G(\th \rVert \th_i) + \sum_{i=1}^{\beta} B_G\Big(\th \rVert (\frac{\alpha}{\beta})^*\Big)
\label{jointmin2}
\end{align}
}
here $(\frac{\alpha}{\beta})^* \in \Theta$ is dual of $\frac{\alpha}{\beta}$. Above problem is Bregman median problem of $n+\beta$ points, $\{\th_1, \th_2 \ldots \th_n,\underbrace{(\alpha/\beta)^*,\ldots,(\alpha/\beta)^*}_{\beta \text{ times}}\}$, as shown in \figref{conjugate} (left).

A geometric interpretation is also shown in \figref{conjugate}. When the prior is conjugate to the likelihood, they both have the same log-partition function (\figref{conjugate}, left). Therefore they induce the same Bregman divergence. Having the same divergence means that distances from $\th$ to $\th_i$ (in likelihood) and the distances from $\th$ to $(\alpha/\beta)^*$ are measured with the same divergence function, yielding the same geometry for both spaces.

It is easier to see using the median formulation of MAP estimation problem that one must choose a prior that is conjugate.  If one chooses a conjugate prior, then the distances among all points are measured using the same function. It is also clear from \eqref{closedsol} that in the conjugate prior case, point induced by the conjugate prior $(\alpha/\beta)^*$ behaves as a sample point. A median problem over a space that have different geometries is an ill-formed problem, as discussed further in the next section.

\subsection{Geometric Interpretation of Non-conjugate Prior}
\label{sec:nonConjugateIter}
We derived expression \eqref{jointmin2} because we considered the prior conjugate to the likelihood function. Had we chosen a non-conjugate prior with log-partition function $Q$, we would have obtained:
{\small
\begin{align}
\hat{\th}_{ML} = \min_{\th \in \Theta} \sum_{i=1}^n B_G(\th \rVert \th_i) + \sum_{i=1}^{\beta} B_Q\left(\th~ \rVert \left(\frac{\alpha}{\beta}\right)^*\right).
\label{jointminnc}
\end{align}
}

\noindent
Here $G$ and $Q$ are different functions defined over $\Theta$. Since these are the functions that define the geometry of the space parameter, having $G \neq Q$ is equivalent to consider them as being defined over different (metric) spaces. Here, it should be noted that distance between sample point ($\th_i$)and the parameter $\th$ is measured using the Bregman divergence $B_G$. On the other hand, the distance between the point induced by the prior $(\alpha/\beta)^*$ and $\th$ is measured using the divergence function $B_Q$.  This means that $(\alpha/\beta)^*$ can \emph{not} be treated as one of the sample points. This tells us that, unlike the conjugate case, belief in the non-conjugate prior can not be encoded in the form of the sample
points. 
%

Another problem with considering a non-conjugate prior is that dual space of $\Theta$ under different functions would be different.  Thus, one will not be able to find the alternate expression for \eqref{jointminnc} equivalent to \eqref{jointmin}, and therefore not be able to find the closed-form expression similar to \eqref{closedsol}. This tells us why non-conjugate does not give us a closed form solution for $\hat{\th}_{MAP}$.

A pictorial representation of this is also shown in \figref{conjugate}. Note that, unlike the conjugate case, in the non-conjugate case, the data likelihood and the prior both belong to different spaces.

\begin{figure}[tc]
 \begin{center}
  \includegraphics[width=0.80 \textwidth]{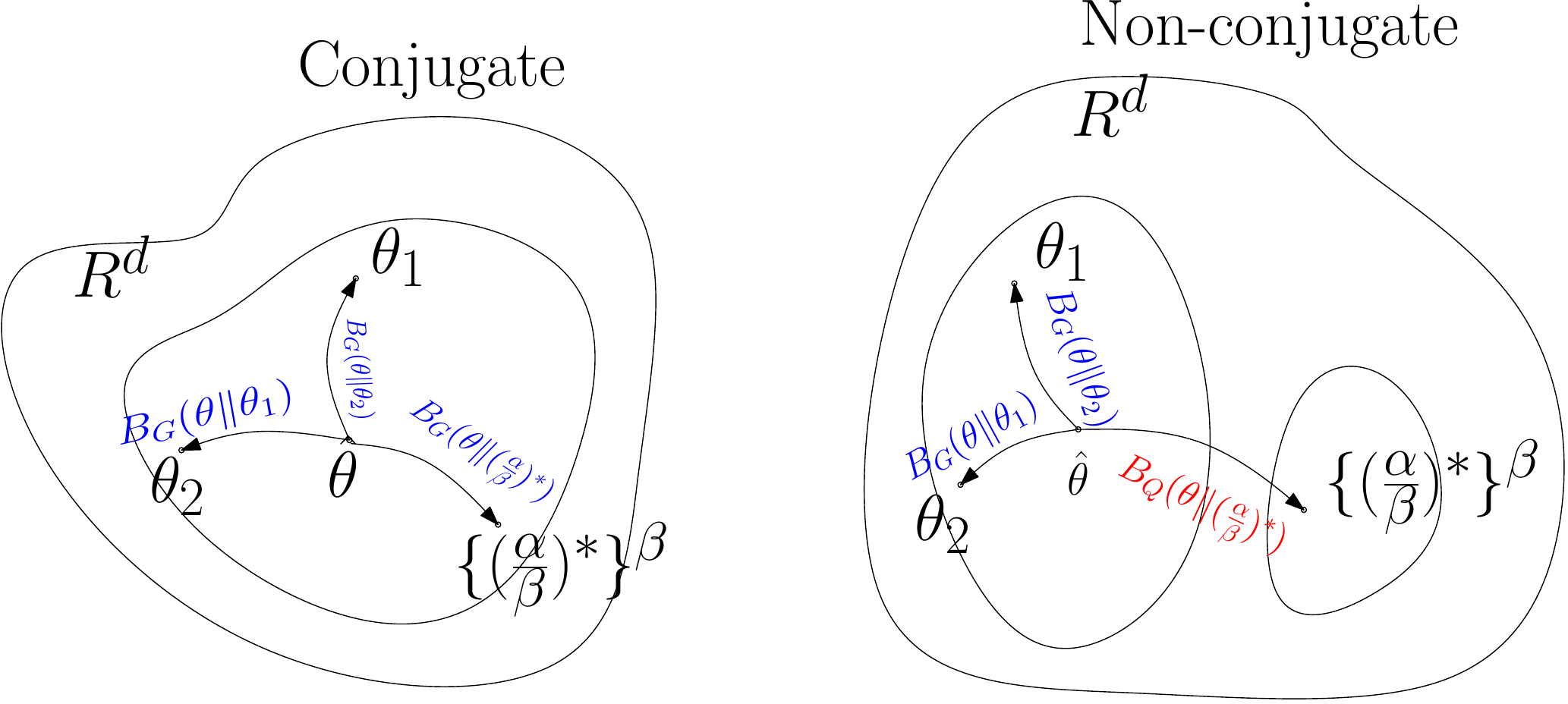}
  \caption{{\small\textsf{Prior in the conjugate case has the same geometry as the likelihood while in the non-conjugate case, they have different geometries.}}}
  \label{fig:conjugate}
 \end{center}
\end{figure}

\section{Information Geometric View}
\label{sec:infogeom}
In this section, we show the appropriateness of the conjugate prior from the information geometric angle. In information geometry, $\Th$ is a statistical manifold such that each $\th \in \Th$ defines a probability distribution. This statistical manifold has an inherent geometry, given by a \emph{metric} and an \emph{affine connection}. One natural metric is the Fisher information metric because of its many attractive properties: it is Riemannian and is invariant under reparameterization (for more details refer \cite{amarinagaoka}). 

In exponential family distributions, the Fisher metric $M(\th)$ is induced by the KL-divergence $KL(\cdot\|\th)$, which is equivalent to the Bregman divergence defined by the log-partition function. Thus, it is the log-partition function $G$ that induces the Fisher metric, and therefore determines the {\it natural} geometry of the space. It justifies our earlier argument of choosing the log-partition function to define the geometry. Now if we were to treat the prior as a point on the statistical manifold defined by the likelihood model, Fisher information metric on the point given by the prior must be same as the one defined on likelihood manifold. This means that prior must have the same log-partition function as the likelihood i.e., it must be conjugate.

\subsection{Generalization to $\alpha$-affine manifold}
\label{sec:alphaFamily}
Not all probability distributions belong to the exponential family (although many do).  A broader family of distributions is the ``$\alpha$-family'' \cite{amarinagaoka}.  Although a full treatment of this family is beyond the scope of the work, we briefly discuss an extension of our results to the $\alpha$-family.  An $\alpha$-family distribution
is defined as:
{\small
\begin{align*}
\log p_{\alpha}(x;\th) = \left\{
 \begin{array}{ll}
\frac{2}{1-\alpha} p(x;\th)^{(1-\alpha)/2} \quad &\alpha \ne 1\\
   \log p(x;\th) \quad \quad  \quad &\alpha=1
   \end{array} \right.
\end{align*}
}
where $p(x;\th)$ defined as in \eqref{expdist}. Note that the exponential family is a special case of $\alpha$-family for $\alpha=1$.

MAP estimation of the parameters in the exponential family can be cast as a median problem, where an appropriate Bregman divergence is used to define the geometry.  In other words, for exponential family, a Bregman-median problem naturally arose as an estimation method.

By using an appropriately defined, ``natural,'' divergence for the $\alpha$-family, one can actually obtain a similar result for this broader family of distrubtions.  Using such a natural divergence, one can also define a ``conjugate prior'' for the $\alpha$-family.  Zhang et al.~\cite{zhang04divegence} shows that such a natural divergence exist for $\alpha$-family and is given by:
{\small
\begin{align*}
D_G^{\alpha}(\th_1,\th_2) = \frac{4}{1-\alpha^2}\left( \frac{1-\alpha}{2} G(\th_1) + \frac{1+\alpha}{2} G(\th_2) -  G\left(\frac{1-\alpha}{2}\th_1 + \frac{1+\alpha}{2}\th_2 \right)\right)
\end{align*}
}
Like the exponential family, this divergence also induces the Fisher information metric. 

\section{Hybrid model}
\label{sec:hybrid}
In this section, we show an application of our analysis to a common supervised and semi-supervised learning framework.  In particular, we consider a generative/discriminative hybrid model \cite{arvind09hybrid,Druck2007,Minka2006} that has been shown to be successful in many application.  

The hybrid model is a mixture of discriminative and generative models,
 each of which has its own separate set of parameters. These two sets of parameters (hence two models) are combined using a prior called the {\it coupling   prior}.  Let $ p(y|\x,\thd)$ be the discriminative component, $p(\x,y|\thg)$ be the generative component and $p(\thd,\thg)$ be the prior that couples discriminative and generative components. The joint likelihood of the data and parameters is:
{\small
\begin{align}
\label{rawmodel}
p(\x,y,\thd,\thg)&=p(\thg,\thd)p(y|\x,\thd)p(\x|\thg) \\
&=p(\thg,\thd)p(y|\x,\thd)\sum_{y'}p(\x,y'|\thg) \nonumber
\end{align}
}
Here $\thd$ is a set of discriminative parameters, $\thg$ a set of generative parameters, and $p(\thg,\thd)$ provides the natural coupling between these two sets of parameters.

The most important aspect of this model is the \emph{coupling prior} $p(\thg,\thd)$, which {\it interpolates} the hybrid model between two extremes: fully generative when the prior forces $\thd=\thg$, and fully discriminative when the prior renders $\thd$ and $\thg$ independent. In non-extreme cases, the goal of the coupling prior is to encourage the generative model and the discriminative model to have similar parameters. It is easy to see that this effect can be induced by many functions. One obvious way is to {\it linearly} interpolate them as done by \cite{Minka2006,Druck2007} using a Gaussian prior (or the Euclidean distance) of the following form:
{\small
\begin{equation}
p(\thg,\thd) \varpropto  \exp \left( -\lambda \norm{\thg - \thd}^2 \right)
\label{guassprior}
\end{equation}
}
where, when  $\lambda=0$, model is purely discriminative while for $\lambda=\infty$, model is purely generative. Thus $\lambda$ in the above expression is the interpolating parameter, and is same as the $\gamma$ in \secref{motiv}. Note that log of prior is nothing but the squared Euclidean distance between two sets of parameters.

It has been noted multiple times \cite{Bouchard2007,arvind09hybrid} that a Gaussian prior is not always appropriate, and the prior should instead be chosen according to models being considered.  Agarwal et al.~\cite{arvind09hybrid} suggested using a prior that is conjugate to the generative model. Their main argument for choosing the conjugate prior came from the fact that this provides a closed form solution for the generative parameters and therefore is mathematically convenient. We will show that it is more than convenience that makes conjugate prior appropriate.  We show that choosing a non-conjugate prior is not only not convenient but also not appropriate. Moreover, our analysis does not assume anything about the expression and the hyperparameters of the prior beforehand, rather derive them automatically.

\subsection{Generalized Hybrid Model}
In order to see the effect of the geometry, we first present the generalized hybrid model for distributions that belong to the exponential family and present them in form of Bregman divergence. Following the expression used in \cite{arvind09hybrid}, the generative model can be written as:
{\small
\begin{align}
p(\x,y|\thg)=h(\x,y)\exp(\langle\thg, T(\x,y)\rangle-G(\thg))
\label{expgen}
\end{align}
}
where $T(\cdot)$ is the potential function similar to $\phi$ in \eqref{expdist}, now only defined on $(\x,y)$.

Let $G^*$ be the dual function of $G$; the corresponding Bregman divergence (retaining only the terms that depend on the parameter $\theta$) is given by:
{\small
\begin{align}
B_{G^*}\left((\x,y) \rVert \grad G(\thg)\right).
\label{expgendiv}
\end{align}
}
Solving the generative model independently reduces to choosing a $\thg$ from the space of all generative parameters $\Theta_g$ which has a geometry defined by the log-partition function $G$. Similarly to the generative model, exponential form of the discriminative model is given as:
\begin{align}
p(y|\x,\thd)=\exp(\langle \thd, T(\x,y) \rangle -M(\thd,\x))
\label{gendisc}
\end{align}
Importantly, the sufficient statistics $T$ are the \emph{same} in the generative and discriminative models; such generative/discriminative pairs occur naturally: logistic regression/naive Bayes and hidden Markov models/conditional random fields are examples.  However, observe that in the discriminative case, the log partition function $M$ depends on both $\x$ and $\thd$ which makes the analysis of the discriminative model harder. Unlike the generative model, one does not have the explicit form of the log-partition function $M$ that is independent of $\x$.  This means that the discriminative component
\eqref{gendisc} can not be converted into an expression like \eqref{expgendiv}, and MLE problem can not be reduced to the Bregman median problem like the one given in \eqref{mlindual}.

\subsection{Geometry of the Hybrid Model}
\begin{figure}[tc]
 \begin{center}
  \includegraphics[width=0.40 \textwidth]{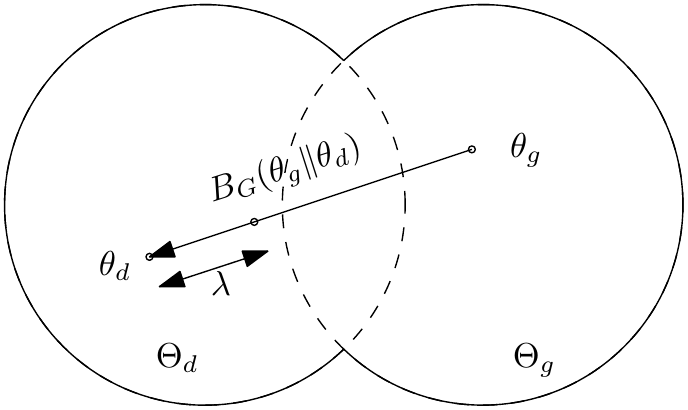}
  \caption{{\small\textsf{Parameters $\thd$ and $\thg$ are interpolated using the Bregman divergence}}}
  \label{fig:interpolate}
 \end{center}
\end{figure}

We simplify the analysis of the hybrid model by writing the discriminative model in an alternate form. This alternate form makes obvious the underlying geometry of the discriminative model.  Note that the only difference between the two models is that discriminative model models the conditional distribution while generative model models the joint distribution. We can use this observation to write the discriminative model in the following alternate form using the expression $p(y|x,\th) = \frac{p(y,x|\th)}{\sum_{y'} p(y'x|\th)}$ and \eqref{expgen}:
{\small
\begin{align}
p(y|x,\thd)=\frac{h(\x,y)\exp(\langle\thd, T(\x,y)\rangle-G(\thd))}{\sum_{y'}h(\x,y')\exp(\langle\thd, T(\x,y')\rangle-G(\thd))}
\label{expdisc}
\end{align}
}
Denote the space of parameters of the discriminative model by $\Th_d$. It is easy to see that geometry of $\Th_d$ is defined by $G$ since function $G$ is defined over $\thd$.  This is same as the geometry of the parameter space of the generative model $\Th_g$. Now let us define a new space $\Th_H$ which is the {\it affine} combination of $\Th_d$ and $\Th_g$.  Now, $\Th_H$ will have the same geometry as $\Th_d$ and $\Th_g$ i.e., geometry defined by $G$. Now the goal of the hybrid model is to find a $\th \in \Th_H$ that maximizes the likelihood of the data under the hybrid model. These two spaces are
shown pictorially in \figref{interpolate}.


\subsection{Prior Selection}
As mentioned earlier, the coupling prior is the most important part of the hybrid model, which controls the amount of coupling between the generative and discriminative models. There are many ways to do this, one of which is given by \cite{Minka2006,Druck2007}. By their choice of Gaussian prior as coupling prior, they implicitly couple the discriminative and generative parameters by the squared Euclidean distance. We suggest coupling these two models by a general prior, of which the Gaussian prior is a special case.

\subsubsection{Bregman Divergence and Coupling Prior:}
Let a general coupling be given by $B_S(\thg\|\thd)$. Notice the direction of the divergence. We have chosen this direction because prior is induced on the generative parameters, and it is clear from \eqref{jointmin2} that parameters on which prior is induced, are placed in the first argument in the divergence function. The direction of the divergence is also shown in \figref{interpolate}.

Now we recall the relation \eqref{priorbregman} between the Bregman divergence and the prior. Ignoring the function $m$ (this is consumed in the measure defined on the probability space) and replacing $\grad{G(\th)}$ by $\th^*$, we get the following expression:
{\small
\begin{align}
\log p(\th_g|\alpha, \beta) = \beta (F(\frac{\alpha}{\beta}) - B_F(\frac{\alpha}{\beta} \rVert \th_g^*))
\label{priorbregman1}
\end{align}
}
Now taking the $\alpha=\lambda \th_d^*$ and $\beta=\lambda$, we get:
{\small
\begin{align}
\log p(\th_g|\lambda \th_d^*, \lambda) & = \lambda (F(\th_d^*) - B_F(\th_d^* \rVert \th_g^*))\\
p(\th_g|\lambda \th_d^*, \lambda) & = \exp(\lambda (F(\th_d^*))) \; \exp(- \lambda B_F(\th_d^* \rVert \th_g^*))
\label{priorbregman2}
\end{align}
}
For the general coupling divergence function $B_S(\thg\|\thd)$, the corresponding coupling prior is given by:
{\small
\begin{align}
\exp(- \lambda B_{S^*}(\th_d^* \rVert \th_g^*)) = \exp(-\lambda (F(\th_d^*))) \; p(\th_g|\lambda \th_d^*, \lambda)
\label{priorbregman2}
\end{align}
}
The above relationship between the divergence function (left side of the expression) and coupling prior (right side of the expression) allows one to define a Bregman divergence for a given coupling prior and vise versa. 

\subsubsection{Coupling Prior for the Hybrid Model:}
We know that that the geometry of the space underlying Gaussian prior is just Euclidean, which does not necessarily match the geometry of the likelihood space. The relationship between prior and divergence \eqref{priorbregman2} allows one to first define the appropriate geometry for the model, and then define the prior that respects this geometry. In hybrid model, this geometry is given by the log partition function $G$ of the generative model. This argument suggests to couple the hybrid model by the divergence of the form $B_G(\thg \|\thd)$. The coupling prior corresponding to this divergence function can be written using \eqref{priorbregman2} as:
{\small
\begin{align}
\exp(-\lambda B_G(\thg \| \thd)) = p(\thg | \lambda \thd^*, \lambda)\;\exp(-\lambda F(\th^*_d))
\end{align}
}
where $\lambda=[0,\infty]$ is the interpolation parameter, interpolating between the discriminative and generative extremes. In dual form, the above expression can be written as:
{\small
\begin{align}
\exp(-\lambda B_G(\thg \| \thd)) = p(\thg | \lambda \thd^*, \lambda)\;\exp(-\lambda G(\th_d)).
\end{align}
}

Here $\exp(-\lambda G(\thd))$ can be thought of as a prior on the discriminative parameters $p(\thd)$. In above expression, $\exp(-\lambda B_G(\thg \| \thd)) = p(\thg|\thg)p(\thd) $ behaves as a joint coupling prior $P(\thd,\thg)$ as originally expected in the model \eqref{rawmodel}. Note that hyperparameters of the prior $\alpha$ and $\beta$ are naturally derived from the geometric view of the conjugate prior. Here $\alpha = \lambda \th_d^*$ and $\beta=\lambda$.  

\subsubsection{Relation with Agarwal et al.:}
The prior we derived in the previous section turns out to be the exactly same as that proposed by Agarwal et al.~\cite{arvind09hybrid}, even though theirs was not formally justified.  In that work, the authors break the coupled prior $p(\thg,\thd)$ into two parts: $p(\thd)$ and $p(\thg|\thd)$. They then derive an expression for the $p(\thg|\thd)$ based on the intuition that the mode of $p(\thg|\thd)$ should be $\thd$. Our analysis takes a different approach by coupling two models with the Bregman divergence rather than prior, and results in the expression and hyperparameters for the prior same as in \cite{arvind09hybrid}.

The two analyses diverge here, however.  Our analysis derives the hyperparameters as: $\alpha = \lambda (\grad G)^{-1}(\th_d)$ and $\beta = \lambda$.  However, the expression of the hyperparameters provided by Agarwal et al.~\cite{arvind09hybrid} was: $\alpha = \lambda \grad G(\th_d)$ and $\beta = \lambda$.  Their derivation was the assumption that the mode of the coupling prior $p(\thg|\thd)$ should be $\thd$. However, in the conjugate prior $p(\th|\alpha,\beta)$, the mode is $\frac{\alpha}{\beta}$, and $\frac{\alpha}{\beta}$ behaves as the sufficient statistics for the prior.  These terms have come from the data space, \emph{not} from the parameter space. Therefore the mode of the coupling prior $p(\thg|\thd)$ should not be $\thd$, but rather the dual of $\thd$ which is $(\grad G)^{-1} (\thd) = \th_d^*$.  Therefore, $\alpha = \lambda \th_d^*$ and $\beta = \lambda$ and our model gives exactly
this.

\section{Related Work and Conclusion}
To our knowledge, there have been no previous attempts to understand Bayesian priors from a geometric perspective. One related piece of work \cite{Snoussi02informationgeometry} uses the Bayesian framework to find the best prior for a given distribution. It is noted that, in that work, the authors use the $\delta$-geometry for the data space and the $\alpha$-geometry for the prior space, and then show the different cases for different values $(\delta, \alpha)$. We emphasize that even though it is possible to use different geometry for the both spaces, it always makes more sense to use the same geometry. As mentioned in {\it remark} 1 in \cite{Snoussi02informationgeometry}, useful cases are obtained only when we consider the same geometry.

We have shown that by considering the geometry induced by a likelihood function, the natural prior that results is exactly the conjugate prior. We have used this geometric understanding of conjugate prior to derive the coupling prior for the discriminative/generative hybrid model. Our derivation naturally gives us the expression and the hyperparameters of this coupling prior. Like the hybrid model, this analysis can be used to give the much simpler geometric interpretations of many models, and to extend the existing results to other models, i.e. we have used this analysis to extend the geometric formulation of MAP problem for the exponential family to $\alpha$-family.

{\small
\bibliography{ref}
\bibliographystyle{splncs03}
}
\end{document}